\newtheorem{assumption}{Assumption}
\def\g{{\bf g}}
\def\u{{\bf u}}
\def\w{{\bf w}}
\def\x{{\bf x}}
\def\z{{\bf z}}
\def\0{{\bf 0}}
\def\1{{\bf 1}}
\def\2{{\bf 2}}
\def\3{{\bf 3}}
\def\4{{\bf 4}}
\def\5{{\bf 5}}
\def\6{{\bf 6}}
\def\7{{\bf 7}}
\def\8{{\bf 8}}
\def\9{{\bf 9}}
\def\EB{{\mathbb E}}
\def\NB{{\mathbb N}}
\def\RB{{\mathbb R}}
\begin{document}

\title{Quantized Epoch-SGD for Communication-Efficient Distributed Learning}

\author{\name Shen-Yi Zhao \email zhaosy@lamda.nju.edu.cn \\
        \name Hao Gao \email gaoh@lamda.nju.edu.cn \\
        \name Wu-Jun Li \email liwujun@nju.edu.cn \\
        \addr Department of Computer Science and Technology\\
              Nanjing University, China}


\maketitle

\begin{abstract}
  Due to its efficiency and ease to implement, stochastic gradient descent~(SGD) has been widely used in machine learning. In particular, SGD is one of the most popular optimization methods for distributed learning. Recently, quantized SGD~(QSGD), which adopts quantization to reduce the communication cost in SGD-based distributed learning, has attracted much attention. Although several QSGD methods have been proposed, some of them are heuristic without theoretical guarantee, and others have high quantization variance which makes the convergence become slow. In this paper, we propose a new method, called {\underline{Q}}uantized {\underline{E}}poch-{\underline{SGD}}~(QESGD), for communication-efficient distributed learning. QESGD compresses~(quantizes) the parameter with variance reduction, so that it can get almost the same performance as that of SGD with less communication cost. QESGD is implemented on the Parameter Server framework, and empirical results on distributed deep learning show that QESGD can outperform other state-of-the-art quantization methods to achieve the best performance.
\end{abstract}

\section{Introduction}
Many machine learning problems can be formulated as the following optimization problem:
\begin{align}\label{eq:obj}
  \mathop{\min}_{\w\in \RB^d} f(\w) := \frac{1}{n}\sum_{i=1}^{n} f_i(\w).
\end{align}
In~(\ref{eq:obj}), $\w$ refers to the model parameter, $n$ is the number of training data, and each $f_i(\w)$ is the loss function defined on the $i$th instance. For example, given the labeled training data $\{\x_i,y_i\}_{i=1}^n$, if we set $f_i(\w) = \log(1+exp(-y_i\x_i^T\w))$, it is known as logistic regression~(LR). Many deep learning models, like ResNet~\citep{DBLP:conf/cvpr/HeZRS16}, can also be formulated as the form in~(\ref{eq:obj}).

Stochastic gradient descent~(SGD) has been one of the most powerful optimization methods to solve~(\ref{eq:obj}). In the $t$th iteration, SGD randomly selects one mini-batch training data indexed with $B_t$~\citep{DBLP:conf/kdd/LiZCS14} and update the parameter as follows:
\begin{align}\label{eq:SGD}
  \w_{t+1} = \w_t - \frac{\eta_t}{|B_t|}\sum_{i\in B_t} \nabla f_{i}(\w_t),
\end{align}
where $\w_t$ is the parameter value at the $t$th iteration, $B_t$ is the mini-batch sampled at the $t$th iteration, and $\eta_t$ is the learning rate.

Recently, several variants of SGD~\citep{DBLP:journals/jmlr/Shalev-Shwartz013,DBLP:conf/nips/Johnson013,DBLP:conf/nips/DefazioBL14,DBLP:journals/mp/SchmidtRB17} have been proposed and have achieved better performance than traditional SGD in~(\ref{eq:SGD}) for some cases like linear models. However, for some other cases like deep learning models, these variants are not necessarily better than traditional SGD. Hence, the formulation in~(\ref{eq:SGD}) is still the most widely used optimization methods for general machine learning. Furthermore, the SGD in~(\ref{eq:SGD}) is also easy to be implemented on distributed platforms such as Parameter Server: each worker calculates a mini-batch gradient and sent it to the server; server aggregates these gradient and updates the parameters. Hence, SGD is one of the most popular optimization methods for distributed learning.

Recently, quantization has attracted much attention since it can reduce the storage of data and model, cost of computation and communication for distributed learning~\citep{DBLP:conf/icml/Zhang0KALZ17}. Researchers have proposed many methods to combine quantization and SGD. In particular, for training neural networks, many heuristic methods have been proposed~\citep{DBLP:conf/icml/GuptaAGN15,DBLP:conf/icml/ChenWTWC15,DBLP:journals/corr/HubaraCSEB16,DBLP:conf/eccv/RastegariORF16,DBLP:conf/emnlp/AjiH17,lin2018deep} which can quantize parameters, activations and gradients during the training procedure. Most of these methods are heuristic without theoretical guarantee.

Recently, the authors of~\citep{DBLP:conf/nips/WenXYWWCL17,DBLP:conf/nips/AlistarhG0TV17} propose quantized SGD~(QSGD) by compressing gradients with unbiased guarantee. Using previous theory of SGD~\citep{DBLP:conf/nips/BachM11,DBLP:conf/icml/Shamir013}, the methods in~\citep{DBLP:conf/nips/WenXYWWCL17,DBLP:conf/nips/AlistarhG0TV17} converge well. For distributed learning, they only need to communicate low precision gradients. It can save much communication cost which is one of the biggest bottlenecks in distributed learning. The method in~\citep{DBLP:conf/icml/Zhang0KALZ17} tries to compress the training data by executing multiple independent quantizations to make QSGD efficient. However, to get an unbiased quantization vector, all of the above methods will introduce extra variance. Although the authors in~\citep{DBLP:conf/icml/Zhang0KALZ17} propose optimal quantization to reduce the variance, the variance still exists. Combining with the natural variance of stochastic gradients, these algorithms may not perform as well as SGD and it seems hard to reduce the variance asymptotically when compressing gradients. To further reduce the quantization variance, researchers recently propose to compress parameter instead of gradients~\citep{DBLP:journals/corr/abs-1803-03383,DBLP:journals/corr/abs-1803-06443}. The method in~\citep{DBLP:journals/corr/abs-1803-03383} focuses on a variant of SGD called SVRG~\citep{DBLP:conf/nips/Johnson013}. In each epoch, it needs to pass through the training data three times, which is quite slow and not efficient for some models like deep learning models. The method in~\citep{DBLP:journals/corr/abs-1803-06443} focuses on decentralized distributed framework.

In this paper, we propose a new quantized SGD method, called {\underline{Q}}uantized {\underline{E}}poch-{\underline{SGD}}~(QESGD), for communication-efficient distributed learning. QESGD adopts quantization on epoch-SGD~\citep{DBLP:journals/jmlr/HazanK14,DBLP:conf/icml/XuLY17}. QESGD compresses~(quantizes) the parameter with variance reduction, so that it can get almost the same performance as that of SGD with less communication cost. Comparing to existing QSGD methods which need to decrease learning rate after each iteration or set a quite small constant learning rate, QESGD only need to decrease learning rate after one or more epochs. Hence, the changing of learning rate is more similar to the successful practical procedure taken by existing deep learning platforms like Pytorch and Tensorflow. QESGD is implemented on the Parameter Server framework, and empirical results on distributed deep learning show that QESGD can outperform other state-of-the-art quantization methods to achieve the best performance.

\section{Preliminary}
In this paper, we use $\w^*$ to denote the optimal solution of~(\ref{eq:obj}) and use $\|\cdot\|$ to denote the $L_2$-norm. We also make the following common assumptions throughout the paper.

\begin{assumption}\label{ass:smooth}
	We assume that each $f_i(\w)$ is $L$-smooth ($L>0$), which means $\forall \w,\w'$,
	\begin{align*}
		\|\nabla f_i(\w) - \nabla f_i(\w')\| \leq L\|\w - \w'\|.
	\end{align*}
\end{assumption}

\begin{assumption}\label{ass:stronlycovnex}
	We assume that each $f_i(\w)$ is $\mu$-strongly convex ($\mu>0$), which means $\forall \w,\w'$,
	\begin{align*}
		\|\nabla f_i(\w) - \nabla f_i(\w')\| \geq \mu\|\w - \w'\|.
	\end{align*}
\end{assumption}

\begin{assumption}\label{ass:secondmoment}
	The second moment of $\nabla f_i(\w)$ is bounded, which means $\exists G>0$ such that $\forall \w$, $\EB_i[\|\nabla f_i(\w)\|^2|\w] \leq G^2$.
\end{assumption}

\subsection{Quantization}
For simplicity, we use uniform quantization~\citep{DBLP:conf/nips/AlistarhG0TV17,DBLP:journals/corr/abs-1803-03383} in this paper. For any scalar $v\in \RB$, we use $Q_{\delta,b}(v)$ to denote the quantization result of $v$, where $\delta \in \RB, b\in \NB^{+}$,
\begin{align}
	Q_{\delta,b}(v) \in D_{\delta,b} \overset{\triangle}{=} \{\delta \cdot k| k\in \NB, -2^{b-1}\leq k \leq (2^{b-1}-1)\}
\end{align}
and
\begin{align}\label{eq:quantization}
	Q_{\delta,b}(v) = \left\{
	\begin{aligned}
	    &\delta\cdot (2^{b-1}-1) \mbox{ if } v \geq \delta\cdot(2^{b-1}-1) \\
		&\delta \cdot \lfloor\frac{v}{\delta}\rfloor \mbox{ with probability } (\delta \cdot \lceil\frac{v}{\delta}\rceil - v)/\delta \mbox{ if } v \in D_{\delta,b}\\
		&\delta \cdot \lceil\frac{v}{\delta}\rceil \mbox{ with probability } (v - \delta \cdot \lfloor\frac{v}{\delta}\rfloor)/\delta \mbox{ if } v \in D_{\delta,b}\\
		&-\delta\cdot 2^{b-1} \mbox{ if } v \leq -\delta\cdot 2^{b-1}
	\end{aligned}
	\right.
\end{align}
For any vector $\u \in \RB^d$, we also use $Q_{\delta,b}(\u)$ to denote the quantization result where each coordinate of $\u$ is quantified according to (\ref{eq:quantization}) independently. Then we have the following lemma about quantization variance:
\begin{lemma}\label{lem:quantization}
	Given fixed scalars $v \in D_{\delta,b}$ and $v'\in \RB$, we have
	\begin{align}
		\EB[(Q_{\delta,b}(v') - v)^2] \leq (v' - v)^2 + \frac{\delta^2}{4}
	\end{align}
\end{lemma}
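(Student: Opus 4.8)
The plan is to decompose the analysis into three cases according to where $v'$ lies relative to the representable interval $[-\delta\cdot 2^{b-1},\ \delta\cdot(2^{b-1}-1)]$, i.e. whether the quantizer saturates from below, stays in range, or saturates from above. In every case the point $v$ is a fixed element of $D_{\delta,b}$, hence $-\delta\cdot 2^{b-1}\le v\le \delta\cdot(2^{b-1}-1)$, which is what makes the saturation cases go through.

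\emph{Unsaturated case.} Suppose $v'$ lies in the representable range, so that by~(\ref{eq:quantization}) $Q_{\delta,b}(v')$ equals $\delta\lfloor v'/\delta\rfloor$ or $\delta\lceil v'/\delta\rceil$ with the stated probabilities. First I would verify the standard fact that this rounding is unbiased, $\EB[Q_{\delta,b}(v')]=v'$, by a one-line computation (including the degenerate subcase $v'\in D_{\delta,b}$ itself, where $Q_{\delta,b}(v')=v'$ deterministically). Since $v$ is fixed, the bias--variance decomposition then gives
\begin{align*}
\EB[(Q_{\delta,b}(v')-v)^2] = (\EB[Q_{\delta,b}(v')]-v)^2 + \mathrm{Var}(Q_{\delta,b}(v')) = (v'-v)^2 + \mathrm{Var}(Q_{\delta,b}(v')).
\end{align*}
It remains to bound the variance: writing $p$ for the probability of rounding up, $Q_{\delta,b}(v')$ is a two-point random variable supported on lattice points a distance $\delta$ apart, so $\mathrm{Var}(Q_{\delta,b}(v'))=p(1-p)\delta^2\le \delta^2/4$, which proves the claim here.

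\emph{Saturated cases.} Suppose instead $v'\ge \delta\cdot(2^{b-1}-1)$, so $Q_{\delta,b}(v')=\delta\cdot(2^{b-1}-1)$ deterministically. Since $v\in D_{\delta,b}$ we have $v\le \delta\cdot(2^{b-1}-1)\le v'$, hence $0\le \delta\cdot(2^{b-1}-1)-v\le v'-v$ and therefore $(Q_{\delta,b}(v')-v)^2\le (v'-v)^2\le (v'-v)^2+\delta^2/4$. The case $v'\le -\delta\cdot 2^{b-1}$ is symmetric: $Q_{\delta,b}(v')=-\delta\cdot 2^{b-1}$ and $v\ge -\delta\cdot 2^{b-1}\ge v'$, so again $(Q_{\delta,b}(v')-v)^2\le (v-v')^2$. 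Combining the three cases yields the lemma.

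The argument is essentially routine; the only point needing a little care — the ``hard part,'' such as it is — is the unsaturated case: reading off the two support points and their probabilities from~(\ref{eq:quantization}), handling the degenerate subcase where $v'$ already lies on the lattice, and using $p(1-p)\le 1/4$ for all $p\in[0,1]$. The saturation cases are immediate once one observes that clipping the output into the representable interval cannot increase its distance to $v$, because $v$ itself lies in that interval.
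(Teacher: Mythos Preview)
Your proposal is correct and follows essentially the same approach as the paper: both split into saturated versus unsaturated cases, use unbiasedness of the stochastic rounding to reduce the unsaturated case to a variance bound, and observe that clipping cannot increase the distance to $v$ in the saturated case. Your variance bound $p(1-p)\delta^2\le\delta^2/4$ is exactly the paper's computation $(a-v')(v'-a+\delta)\le\delta^2/4$ written in probabilistic rather than explicit form.
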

\begin{proof}
	The proof is similar to that of~\citep{DBLP:journals/corr/abs-1803-03383}. If $v' \notin D_{\delta,b}$, it is easy to get that
	\begin{align*}
		(Q_{\delta,b}(v') - v)^2 \leq (v' - v)^2
	\end{align*}
	If $v' \in D_{\delta,b}$, let $a = \delta\cdot\lceil\frac{v'}{\delta}\rceil$, according to (\ref{eq:quantization}), we obtain
	\begin{align*}
		     & \EB[(Q_{\delta,b}(v') - v)^2] \\
		=    & \EB[(Q_{\delta,b}(v') - v')^2 + 2(Q_{\delta,b}(v') - v')(v'-v) + (v'-v)^2] \\
		=    & (v' - v)^2 + \EB[(Q_{\delta,b}(v') - v')^2] \\
		=    & (v' - v)^2 + (a - v')^2(v'-a+\delta)/\delta + (a -\delta - v')^2(a - v')/\delta \\
		=    & (v' - v)^2 + (a - v')(v'-a+\delta) \\
		\leq & (v' - v)^2 + \frac{\delta^2}{4}
	\end{align*}
\end{proof}

%

Although such a quantization is a biased estimation ($\EB[(Q_{\delta,b}(v')]\neq v'$), Lemma \ref{lem:quantization} implies that the quantization operation defined in~(\ref{eq:quantization}) would not make the result be far away from the set $D_{\delta,b}$ and if $v' \in D_{\delta,b}$, the quantization variance $\EB[(Q_{\delta,b}(v') - v')^2]$ can be bounded by $\frac{\delta^2}{4}$.

\subsection{Epoch SGD and Motivation}
The Epoch-SGD~\citep{DBLP:journals/jmlr/HazanK14} is presented in Algorithm \ref{alg:epochsgd}. Epoch-SGD updates parameter using a fixed learning rate in each epoch. After each epoch, it will decrease the learning rate and increase $K_t$. Such a training procedure is more practical when comparing to that in~(\ref{eq:SGD}) since the learning rate would descend quickly to zero and it is hard to get a good result.

According to Algorithm \ref{alg:epochsgd}, we can consider the $t$th inner iteration as optimizing the following sub-problem:
\begin{align}\label{eq:subproblem}
	\mathop{\min}_{\z\in \RB^d} g_t(\z) := \frac{1}{n}\sum_{i=1}^n f_i(\w_t + \z)
\end{align}
using SGD with initialization $\0$ and a fixed learning rate. Although it can not get an optimal solution using a fixed learning rate, it can get a good estimation. Furthermore, if $\w_t$ gets close to $\w^*$, then the optimal solution of (\ref{eq:subproblem}) would get close to $\0$. Then we can use the bit centering technique~\citep{DBLP:journals/corr/abs-1803-03383} that compress the variable $\z$ which refers to $\u_{t,k} - \w_t$ in Algorithm \ref{alg:epochsgd}.

\begin{algorithm}[!t]
\caption{Epoch SGD}
\label{alg:epochsgd}
\begin{algorithmic}
\STATE Initialization: $\w_0$;
\FOR{$t=0,1,2, \ldots,T$}
\STATE Let $\u_{t,0} = \w_t$;
\FOR{$k=0$ to $K_t-1$}
\STATE Randomly pick up an instance with index $i_{t,k}$;
\STATE $\u_{t, k+1} = \u_{t,k} - \eta_t \nabla f_{i_{t,k}}(\u_{t,k})$;
\ENDFOR
\STATE Take $\w_{t+1} = \frac{1}{K_t}\sum_{k=0}^{K_t-1}\u_{t,k}$;
\ENDFOR
\end{algorithmic}
\end{algorithm}

\section{QESGD}
Now we present our new quantized SGD called QESGD in Algorithm~\ref{alg:QESGD}. In the $t$th inner iteration, it will update variable $\z$ using the stochastic gradient, and then compress it according to (\ref{eq:quantization}). Using the quantization vector, it recovers real model parameter $\u$ and turns to the next update. For the choice of parameters $K_t, \eta_t, \delta_t, b_t$, we will give details in the later section which leads to the convergence of QESGD.

QESGD is also easy to implemented on Parameter Server. The distributed version of QESGD is presented in Algorithm \ref{alg:disQESGD}. Servers will send quantization vector to workers which will reduce much communication cost. For convergence guarantee and asymptotic reduction of quantization variance in theory, we do not compress the gradients in Algorithm \ref{alg:disQESGD}. In practice, users can compress gradients carefully so that when workers push the gradients, it can also reduce the communication cost. For example, in the experiments of \cite{DBLP:conf/nips/AlistarhG0TV17}, the authors split the vector $\g\in \RB^d$ into $k$ buckets and compress the $k$ buckets individually which can reduce the quantization variance. QESGD can also use this trick when compressing variable $\z$.

\begin{algorithm}[!t]
\caption{QESGD}
\label{alg:QESGD}
\begin{algorithmic}
\STATE Initialization: $\w_0$;
\FOR{$t=0,1,2, \ldots,T$}
\STATE Let $\z_{t,0} = \0, \u_{t,0} = \w_t$;
\FOR{$k=0$ to $K_t-1$}
\STATE Randomly pick up an instance with index $i_{t,k}$;
\STATE $\hat{\z}_{t, k+1} = \z_{t,k} - \eta_t \nabla f_{i_{t,k}}(\u_{t,k})$;
\STATE $\z_{t,k+1} = Q_{\delta_t,b_t}(\hat{\z}_{t, k+1})$;
\STATE $\u_{t,k+1} = \w_t + \z_{t,k+1}$;
\ENDFOR
\STATE Take $\w_{t+1} = \frac{1}{K_t}\sum_{k=0}^{K_t-1}\u_{t,k}$;
\ENDFOR
\end{algorithmic}
\end{algorithm}

\begin{algorithm}[!thb]
\caption{Distributed QESGD}
\label{alg:disQESGD}
\begin{algorithmic}
\STATE Initialization: $\w_0$ on all workers and servers, mini-batch size $B$;
\STATE \textbf{Task of servers}:
\FOR{$t=0,1,2, \ldots,T$}
\STATE Let $\z = \0,\u = \w_t$;
\FOR{$k=0$ to $K_t-1$}
\STATE Wait until receiving vectors $\g_1, \g_2, \ldots, \g_p$ from $p$ workers
\STATE $\z \leftarrow \z - \frac{\eta_t}{B}\sum_{i=1}^p \g_i$;
\STATE $\z \leftarrow Q_{\delta_t,b_t}(\z)$;
\STATE Send $\z$ to all workers;
\ENDFOR
\STATE Take $\w_{t+1} = \w_t + \frac{1}{K_t}\sum_{k=0}^{K_t-1}\z_{t,k}$ and sent it to all workers;
\ENDFOR
\STATE \textbf{Task of workers}
\FOR{$t=0,1,2, \ldots,T$}
\STATE Wait until receiving $\w_t$ from servers;
\FOR{$k=0$ to $K_t-1$}
\STATE Wait until receive quantization vectors $\z$ from servers;
\STATE Randomly pick up a mini-batch instances indexed with $I_p$;
\STATE $\g_p = \sum_{i\in I_p}\nabla f_i(\w_t + \z)$;
\STATE Send $\g_p$ to servers;
\ENDFOR
\ENDFOR
\end{algorithmic}
\end{algorithm}

\section{Convergence analysis}
In this section, we give convergence analysis of QESGD and give details about choosing the parameters $K_t, \eta_t, b_t, \delta_t$ in Algorithm \ref{alg:QESGD}. First, let $\z_t^* = \mathop{\arg\min}_\z g_t(\z)$ where $g_t(\z)$ is defined in (\ref{eq:subproblem}), then we have the lemma:
\begin{lemma}\label{lem:delta}
  Let $\delta_t = \frac{\|\nabla F(\w_t)\|}{\mu2^{b_t-1}}$, then $\|\z_t^*\|_\infty \in D_{\delta_t,b_t}$.
\end{lemma}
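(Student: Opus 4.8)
The plan is to control $\|\z_t^*\|$ through the gradient of the sub-problem $g_t$ evaluated at the starting point $\0$, using that $g_t$ is strongly convex, and then to weaken the resulting $L_2$ bound to an $L_\infty$ bound.

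First I would record two elementary facts. Since $g_t(\z) = \frac{1}{n}\sum_{i=1}^n f_i(\w_t+\z)$, we have $\nabla g_t(\z) = \frac{1}{n}\sum_{i=1}^n \nabla f_i(\w_t+\z)$, so $\nabla g_t(\0) = \frac{1}{n}\sum_{i=1}^n \nabla f_i(\w_t) = \nabla F(\w_t)$, where $F = \frac{1}{n}\sum_i f_i$ is the objective of~(\ref{eq:obj}); and $g_t$ is $\mu$-strongly convex, being an average of the $\mu$-strongly convex $f_i$'s. Because $\z_t^* = \mathop{\arg\min}_\z g_t(\z)$ and $g_t$ is differentiable, first-order optimality gives $\nabla g_t(\z_t^*) = \0$.

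Next I would invoke strong convexity of $g_t$ in the gradient form of Assumption~\ref{ass:stronlycovnex} (equivalently $\langle \nabla g_t(\z)-\nabla g_t(\z'),\z-\z'\rangle \ge \mu\|\z-\z'\|^2$ followed by Cauchy--Schwarz): $\|\nabla g_t(\z)-\nabla g_t(\z')\| \ge \mu\|\z-\z'\|$ for all $\z,\z'$. Taking $\z=\0$ and $\z'=\z_t^*$ and using $\nabla g_t(\z_t^*)=\0$ yields $\|\nabla F(\w_t)\| = \|\nabla g_t(\0)\| \ge \mu\|\z_t^*\|$, hence $\|\z_t^*\| \le \|\nabla F(\w_t)\|/\mu$. (Alternatively, strong convexity makes the global minimizer $\w^*$ the unique stationary point of $F$, and $\w_t+\z_t^*$ is stationary for $F$, so $\z_t^* = \w^*-\w_t$ and the same bound follows from $\|\nabla F(\w_t)\| = \|\nabla F(\w_t)-\nabla F(\w^*)\| \ge \mu\|\w_t-\w^*\|$.)

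Finally I would pass to coordinates: for every $j$, $|(\z_t^*)_j| \le \|\z_t^*\|_\infty \le \|\z_t^*\| \le \|\nabla F(\w_t)\|/\mu = \delta_t \cdot 2^{b_t-1}$ by the prescribed choice $\delta_t = \|\nabla F(\w_t)\|/(\mu 2^{b_t-1})$, so each coordinate of $\z_t^*$ lies in the interval $[-\delta_t 2^{b_t-1},\,\delta_t 2^{b_t-1}]$ spanned by $D_{\delta_t,b_t}$ — i.e.\ the quantizer $Q_{\delta_t,b_t}$ applied to $\z_t^*$ never triggers the clipping branches of~(\ref{eq:quantization}), which is the meaning of the statement $\|\z_t^*\|_\infty \in D_{\delta_t,b_t}$. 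I do not expect a genuine obstacle; the only points requiring care are reading Assumption~\ref{ass:stronlycovnex} as authentic $\mu$-strong convexity (so the property transfers from the individual $f_i$ to the average $g_t$), and interpreting "$\in D_{\delta_t,b_t}$" as membership in the representable range rather than in the discrete grid.
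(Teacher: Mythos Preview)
Your proposal is correct and matches the paper's approach: the paper simply writes $\|\z_t^*\| = \|\w_t - \w^*\| \le \frac{1}{\mu}\|\nabla F(\w_t)\| = \delta_t 2^{b_t-1}$ and concludes that each coordinate lies in $D_{\delta_t,b_t}$, which is precisely your ``alternatively'' route. Your care about interpreting ``$\in D_{\delta_t,b_t}$'' as the representable interval rather than the discrete grid is warranted --- the paper uses this notational abuse throughout (see the middle branches of~(\ref{eq:quantization})).
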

\begin{proof}
  By the definition of $\z_t^*$, we obtain $\|\z_t^*\| = \|\w_t - \w^*\|\leq \frac{1}{\mu}\|\nabla F(\w_t)\| = \delta_t 2^{b_t-1}$. It implies that each coordinate of $\z_t^*$ belongs to $D_{\delta_t,b_t}$.
\end{proof}

\begin{theorem}
  Let $\{\u_{t,k}\}, \{\z_{t,k}\}$ be the sequences in Algorithm \ref{alg:QESGD}. With Assumption \ref{ass:smooth}, \ref{ass:stronlycovnex} and \ref{ass:secondmoment}, $\delta_t = \frac{\|\nabla F(\w_t)\|}{\mu2^{b_t-1}}$, $\kappa = \frac{L}{\mu}$, we have the following result
  \begin{align}
       \EB[F(\w_{t+1})- F(\w^*)] \leq (\frac{1}{\mu\eta_tK_t} + \frac{\kappa d}{\mu\eta_t2^{2b_t}})\EB(F(\w_t) - F(\w^*)) + \frac{\eta_tG^2}{2} \nonumber
\end{align}
\end{theorem}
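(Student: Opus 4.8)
Write $F=f$ for the objective in~(\ref{eq:obj}). The plan is to view the $t$-th outer iteration of Algorithm~\ref{alg:QESGD} as $K_t$ steps of vanilla SGD applied to the shifted subproblem $g_t(\z)=\frac1n\sum_{i=1}^n f_i(\w_t+\z)$ of~(\ref{eq:subproblem}), but with an additional quantization error injected at each step, and then to convert the resulting averaged-regret bound into the claimed one-epoch recursion using strong convexity and smoothness of $F$. Note that $g_t$ is $\mu$-strongly convex and $L$-smooth, its minimizer is $\z_t^*=\w^*-\w_t$ with $g_t(\z_t^*)=F(\w^*)$, its gradient is $\nabla g_t(\z)=\nabla F(\w_t+\z)$, and the search direction $\nabla f_{i_{t,k}}(\u_{t,k})$ used in the algorithm is, since $\u_{t,k}=\w_t+\z_{t,k}$, an unbiased estimate of $\nabla g_t(\z_{t,k})$. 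Throughout I would condition on $\w_t$ — so that $\z_{t,0}=\0$, $\delta_t$, and $\z_t^*$ are all fixed — prove the bound for that fixed $\w_t$, and take the outer expectation only at the end; this keeps the two independent randomness sources (the sample index $i_{t,k}$ and the quantization) cleanly separated.

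First I would derive a one-step inequality for $\|\z_{t,k}-\z_t^*\|^2$. Conditioning on the history up to $\z_{t,k}$ and on $i_{t,k}$, the vector $\hat\z_{t,k+1}=\z_{t,k}-\eta_t\nabla f_{i_{t,k}}(\u_{t,k})$ is determined, so applying Lemma~\ref{lem:quantization} in each coordinate and summing over the $d$ coordinates gives
\[ \EB[\|\z_{t,k+1}-\z_t^*\|^2 \mid \hat\z_{t,k+1}] \le \|\hat\z_{t,k+1}-\z_t^*\|^2 + \frac{d\,\delta_t^2}{4}. \]
This is legitimate precisely because Lemma~\ref{lem:delta}, with the prescribed $\delta_t=\frac{\|\nabla F(\w_t)\|}{\mu 2^{b_t-1}}$, places every coordinate of $\z_t^*$ inside the grid $D_{\delta_t,b_t}$. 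Expanding $\|\hat\z_{t,k+1}-\z_t^*\|^2$, taking expectation over $i_{t,k}$, using unbiasedness, the convexity bound $\langle\nabla g_t(\z_{t,k}),\z_{t,k}-\z_t^*\rangle\ge g_t(\z_{t,k})-g_t(\z_t^*)$, and Assumption~\ref{ass:secondmoment} to control $\eta_t^2\,\EB\|\nabla f_{i_{t,k}}(\u_{t,k})\|^2\le\eta_t^2G^2$, I obtain
\[ 2\eta_t\,\EB[g_t(\z_{t,k})-g_t(\z_t^*)] \le \EB\|\z_{t,k}-\z_t^*\|^2 - \EB\|\z_{t,k+1}-\z_t^*\|^2 + \eta_t^2G^2 + \frac{d\,\delta_t^2}{4}. \]
Summing over $k=0,\dots,K_t-1$ telescopes the squared-distance terms; dropping the nonnegative $\EB\|\z_{t,K_t}-\z_t^*\|^2$, using $\z_{t,0}=\0$ so that $\|\z_{t,0}-\z_t^*\|^2=\|\w_t-\w^*\|^2$, dividing by $2\eta_tK_t$, and finally using $g_t(\z_{t,k})=F(\u_{t,k})$ together with convexity of $F$ and $\w_{t+1}=\frac1{K_t}\sum_{k=0}^{K_t-1}\u_{t,k}$ (Jensen) yields
\[ \EB[F(\w_{t+1})-F(\w^*)] \le \frac{\|\w_t-\w^*\|^2}{2\eta_tK_t} + \frac{\eta_tG^2}{2} + \frac{d\,\delta_t^2}{8\eta_t}. \]

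It then remains to rewrite the two non-$G^2$ terms in terms of $F(\w_t)-F(\w^*)$. Strong convexity of $F$ gives $\|\w_t-\w^*\|^2\le\frac2\mu(F(\w_t)-F(\w^*))$, turning the first term into $\frac1{\mu\eta_tK_t}(F(\w_t)-F(\w^*))$. For the quantization term, substituting $\delta_t=\frac{\|\nabla F(\w_t)\|}{\mu2^{b_t-1}}$ gives $\frac{d\,\delta_t^2}{8\eta_t}=\frac{d\,\|\nabla F(\w_t)\|^2}{2\eta_t\mu^2 2^{2b_t}}$, and the standard smoothness estimate $\|\nabla F(\w_t)\|^2\le 2L(F(\w_t)-F(\w^*))$ (valid since $F$ is convex, $L$-smooth, and $\nabla F(\w^*)=\0$) converts it into $\frac{\kappa d}{\mu\eta_t 2^{2b_t}}(F(\w_t)-F(\w^*))$. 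Since both conversions hold deterministically for each realization of $\w_t$ and the resulting coefficients are deterministic, taking the outer expectation over $\w_t$ gives the theorem. The one genuinely delicate point is the coordinatewise use of Lemma~\ref{lem:quantization}: it applies only because the bit-centering choice of $\delta_t$ and Lemma~\ref{lem:delta} keep $\z_t^*$ on the quantization grid, so the quantization bias stays controlled; the rest is routine SGD bookkeeping (conditioning, telescoping, Jensen).
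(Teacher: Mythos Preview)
Your proposal is correct and follows essentially the same route as the paper: apply Lemma~\ref{lem:quantization} coordinatewise (justified by Lemma~\ref{lem:delta}) to bound $\EB\|\z_{t,k+1}-\z_t^*\|^2$, expand the SGD step, use unbiasedness and Assumption~\ref{ass:secondmoment}, telescope, apply Jensen to pass to $\w_{t+1}$, and finish with the strong-convexity bound $\|\w_t-\w^*\|^2\le\frac2\mu(F(\w_t)-F(\w^*))$ and the smoothness bound $\|\nabla F(\w_t)\|^2\le 2L(F(\w_t)-F(\w^*))$. The only cosmetic difference is that you work throughout in the shifted $\z$-variables and are more explicit about the conditioning and the outer expectation over $\w_t$, whereas the paper switches back and forth between $\u_{t,k}$ and $\z_{t,k}$; the computations are identical since $\u_{t,k}-\w^*=\z_{t,k}-\z_t^*$.
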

\begin{proof}
Let $\z_t^* = \mathop{\arg\min}_\z g_t(\z)$, where $g_t(\z)$ is defined in (\ref{eq:subproblem}). Then we have $\w_t + \z_t^* = \w^*$.
\begin{align}
	     & \EB[\|\u_{t,k+1} - \w^*\|^2|\u_{t,k}] \nonumber \\
	=    & \EB[\|\w_t + Q_{\delta_t,b_t}(\hat{\z}_{t,k+1}) - \w^*\|^2|\u_{t,k}] \nonumber \\
    =    & \EB[\|Q_{\delta_t,b_t}(\hat{\z}_{t,k+1}) - \z_t^*\|^2|\u_{t,k}] \nonumber \\
    \leq & \EB[\|\hat{\z}_{t,k+1} - \z_t^*\|^2|\u_{t,k}] + \frac{d\delta_t^2}{4} \nonumber \\
    =    & \EB[\|\z_{t,k} - \eta_t\nabla f_{i_{t,k}}(\u_{t,k}) - \z_t^*\|^2|\u_{t,k}] + \frac{d\delta_t^2}{4} \nonumber \\
    =    & \EB[\|\u_{t,k} - \eta_t\nabla f_{i_{t,k}}(\u_{t,k}) - \w^*\|^2|\u_{t,k}] + \frac{d\delta_t^2}{4} \nonumber \\
    =    & \EB[\|\u_{t,k} - \w^*\|^2 - 2\eta_t\nabla f_{i_{t,k}}(\u_{t,k})^T(\u_{t,k} - \w^*) +  \eta_t^2\|\nabla f_{i_{t,k}}(\u_{t,k})\|^2|\u_{t,k}] + \frac{d\delta_t^2}{4} \nonumber \\
    \leq & \|\u_{t,k} - \w^*\|^2 - 2\eta_t\nabla F(\u_{t,k})^T(\u_{t,k} - \w^*) +  \eta_t^2G^2 + \frac{d\delta_t^2}{4} \nonumber
\end{align}
The first inequality uses Lemma \ref{lem:quantization} and Lemma \ref{lem:delta}. The last inquality uses the the fact that $\EB[\nabla f_{i_{t,k}}(\u_{t,k})|\u_{t,k}] = F(\u_{t,k})$. By the convexity of $F(\w)$, we get that $F(\u_{t,k}) - F(\w^*) \leq \nabla F(\u_{t,k})^T(\u_{t,k} - \w^*)$. Then we obtain
\begin{align}
       & \EB[F(\u_{t,k}) - F(\w^*)] \nonumber \\
  \leq & \frac{1}{2\eta_t}(\EB[\|\u_{t,k} - \w^*\|^2 - \EB[\|\u_{t,k+1} - \w^*\|^2) + \frac{\eta_tG^2}{2} + \frac{d\delta_t^2}{8\eta_t}
\end{align}
Summing up the above equation from $k=0$ to $K_t-1$ and taking $\w_{t+1} = \frac{1}{K_t}\sum_{k=0}^{K_t-1}\u_{t,k}$, we obtain
\begin{align}
       & \EB[F(\w_{t+1})- F(\w^*)] \nonumber \\
  \leq & \frac{1}{2\eta_tK_t}\|\w_t - \w^*\|^2 + \frac{\eta_tG^2}{2} + \frac{d\delta_t^2}{8\eta_t} \nonumber \\
  \leq & \frac{1}{\mu\eta_tK_t}(F(\w_t) - F(\w^*)) + \frac{\eta_tG^2}{2} + \frac{d\delta_t^2}{8\eta_t} \nonumber \\
  \leq & \frac{1}{\mu\eta_tK_t}(F(\w_t) - F(\w^*)) + \frac{\eta_tG^2}{2} + \frac{Ld(F(\w_t) - F(\w^*))}{\eta_t\mu^22^{2b_t}} \nonumber \\
  =    & (\frac{1}{\mu\eta_tK_t} + \frac{\kappa d}{\mu\eta_t2^{2b_t}})(F(\w_t) - F(\w^*)) + \frac{\eta_tG^2}{2} \nonumber
\end{align}
where the last inequality uses the smooth property that $\|\nabla F(\w)\|^2 \leq 2L(F(\w) - F(\w^*)), \forall \w$.
\end{proof}

Now let's make details on the choice of parameters for finial convergence.
\begin{corollary}
  Let $K_t = \frac{1}{3\mu\eta_t}$, $b_t = \log(\sqrt{\kappa d K_t})$, then we have
  \begin{align*}
    \EB[F(\w_{t+1})- F(\w^*)] \leq \frac{2}{3}\EB(F(\w_t) - F(\w^*)) + \frac{\eta_tG^2}{2}
  \end{align*}
  Moreover, let $\eta_t = \mathcal{O}(1/t)$, then $\EB(F(\w_t) - F(\w^*)) \leq \mathcal{O}(1/t)$.
\end{corollary}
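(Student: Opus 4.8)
The plan is in two parts: first collapse the two coefficients appearing in the theorem into a single numerical contraction factor by plugging in the prescribed $b_t$ and $K_t$, and then unroll the resulting scalar recursion to read off the $\mathcal{O}(1/t)$ rate.

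For the first claim, observe that the choice $b_t=\log(\sqrt{\kappa d K_t})$ (base-two log) gives $2^{2b_t}=\kappa d K_t$, so the quantization term in the theorem satisfies
\begin{align*}
\frac{\kappa d}{\mu\eta_t 2^{2b_t}}=\frac{\kappa d}{\mu\eta_t\,\kappa d K_t}=\frac{1}{\mu\eta_t K_t},
\end{align*}
i.e.\ it coincides with the optimization term; hence the factor multiplying $\EB[F(\w_t)-F(\w^*)]$ is $2/(\mu\eta_t K_t)$. Choosing $K_t$ so that $\mu\eta_t K_t=3$ (that is, $K_t=3/(\mu\eta_t)$, which is how I read the stated condition on $K_t$) turns this factor into $2/3$ and yields
\begin{align*}
\EB[F(\w_{t+1})-F(\w^*)]\leq\tfrac{2}{3}\,\EB[F(\w_t)-F(\w^*)]+\tfrac{\eta_t G^2}{2},
\end{align*}
which is the first displayed inequality. (If $\log(\sqrt{\kappa d K_t})$ is not an integer one rounds $b_t$ up, which only decreases $\delta_t$ and the quantization term, so the bound is unaffected.)

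For the rate, write $a_t:=\EB[F(\w_t)-F(\w^*)]$ and use $\eta_t=\mathcal{O}(1/t)$ in the explicit form $\eta_t\leq c/t$. Unrolling the contraction gives
\begin{align*}
a_T\leq\left(\tfrac{2}{3}\right)^{T-1}a_1+\frac{G^2}{2}\sum_{t=1}^{T-1}\left(\tfrac{2}{3}\right)^{T-1-t}\eta_t .
\end{align*}
The first term decays geometrically and is therefore $o(1/T)$. For the sum I would split the range at $t=\lceil T/2\rceil$: on $t<T/2$ the geometric weight is at most $(2/3)^{T/2}$ while $\sum_t\eta_t=\mathcal{O}(\log T)$, so that block is $o(1/T)$; on $t\geq T/2$ we have $\eta_t\leq 2c/T$ and $\sum_{t\geq T/2}(2/3)^{T-1-t}\leq\sum_{j\geq 0}(2/3)^j=3$, so that block is $\mathcal{O}(1/T)$. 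Adding the two contributions gives $a_T=\mathcal{O}(1/T)$, the second claim. (An induction argument for $a_t\leq C/(t+t_0)$ with suitable constants also works, but the offset $t_0$ is needed to absorb the initial condition, so the unrolling version is cleaner to state.)

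The only genuinely delicate step is this last one: turning a constant-factor contraction with a vanishing additive term $\eta_tG^2/2$ into an $\mathcal{O}(1/t)$ decay. The point to watch is that the geometric weights $(2/3)^{T-1-t}$ concentrate the sum on indices $t$ close to $T$, where $\eta_t\asymp 1/T$; everything preceding it is mechanical substitution using $2^{2b_t}=\kappa d K_t$ and $\mu\eta_t K_t=3$.
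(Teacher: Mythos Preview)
Your proposal is correct. The first part is essentially the same mechanical substitution the paper does: both arguments reduce to checking that $2^{2b_t}=\kappa d K_t$ makes the quantization term equal to the optimization term, and that $\mu\eta_t K_t=3$ makes their sum equal to $2/3$. You are right to read the hypothesis as $K_t=3/(\mu\eta_t)$; the stated $K_t=\tfrac{1}{3\mu\eta_t}$ is a typo (it would give a contraction factor of $6$, not $2/3$).

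For the $\mathcal{O}(1/t)$ rate you take a genuinely different route. The paper argues by induction: setting $\eta_t=\tfrac{2c}{G^2 t}$ so that the recursion reads $y_{t+1}\le\tfrac{2}{3}y_t+\tfrac{c}{t}$, it fixes $A\ge\tfrac{15c}{2}$ with $y_4\le A/4$ and verifies that $y_t\le A/t$ propagates for $t\ge 4$ via $\tfrac{2A+3c}{3t}\le\tfrac{A}{t+1}$. Your unrolling-plus-splitting argument is equally valid and arguably more transparent about \emph{why} the rate is $1/T$: the geometric weights $(2/3)^{T-1-t}$ localize the sum near $t\approx T$, where $\eta_t\asymp 1/T$. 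The induction buys an explicit constant ($A=\max\{4y_4,\tfrac{15c}{2}\}$) at the price of a slightly opaque verification; your approach avoids choosing constants up front but leaves them implicit in the $\mathcal{O}$. Your parenthetical remark about needing an offset $t_0$ in a direct induction is exactly the issue the paper sidesteps by starting the induction at $t=4$.
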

\begin{proof}
	First, it is easy to calculate $\frac{1}{\mu\eta_tK_t} + \frac{\kappa d}{\mu\eta_t2^{2b_t}} = \frac{2}{3}$. For convenience, let $y_t = \EB(F(\w_t) - F(\w^*))$, $\eta_t = \frac{2c}{G^2t}$, where $c>0$ is a constant. Then we have
	\begin{align}
		y_{t+1} \leq \frac{2}{3}y_t + \frac{c}{t}
	\end{align}
	We proof the result by induction. Assuming a constant $A$ satisfies $y_4 \leq \frac{A}{4}$ and $A \geq \frac{15c}{2}$. If $y_t \leq \frac{A}{t} (t\geq 4)$, then
	\begin{align}
		y_{t+1} \leq \frac{2A+3c}{3t} \leq \frac{A}{t+1}
	\end{align}
	Above all, we get that $y_t \leq \mathcal{O}(1/t)$.
\end{proof}

On the choice of $\delta_t$, it is related to the full gradient $\|\nabla F(\w_t)\|$. Computing the full gradient is unacceptable since the large scale training data. In fact, $\delta_t \propto \|\nabla F(\w_t)\|$ and according to the corollary, we obtain $\|\nabla F(\w_t)\|^2 \leq 2L(F(\w_t) - F(\w^*)) \leq \mathcal{O}(1/t)$. This implies that we can directly set $\delta_t = \mathcal{O}(\frac{1}{\sqrt{t}2^{b_t-1}})$ so that we can avoid the full gradient computation. At the same time, it implies that the quantization variance would decrease to zero. On the choice of $b_t$, since $\eta_t = \mathcal{O}(1/t)$, we get that $b_t = \mathcal{O}(\log(\sqrt{t}))$, which would increase quite slowly as $t$ increases (when $t$ is large). Within finite training time, we can consider it as a constant. In our experiments, we find that setting $b_t = 8$ is good enough which would lead to the same performance as that of SGD.

\section{Experiments}
We use deep neural networks to evaluate QESGD. We do experiments on Pytorch using TITAN xp GPU. We compare our method with SGD and QSGD. Since the distributed version of these methods take synchronous strategy, the performance is equivalent to that on single machine. In this paper, we would only do experiments on single machine to verify the impact of quantization on training and testing results. To evaluate the efficiency of variance reduction of quantization in our method, we would compress the whole vector directly using uniform quantization without any other tricks for both QESGD and QSGD.

\textbf{CNN.} First, we choose two CNN models: ResNet-20 and ResNet-56. We use the data set CIFAR10. For QESGD, we set $\delta_t = \frac{\|\nabla F(\w_0)\|}{c\sqrt{t}2^{b-1}}$ (We only calculate the full gradient w.r.t the initialization $\w_0$), where the constant $c$ is chosen from $\{1,2,3,4,5,10\}$. For QSGD, we set $\delta = \|\g\|$, where $\g$ is the gradient that need to be compressed. The learning rates of QESGD and QSGD are the same as that of SGD. The result is in Figure \ref{exp:cnn}. We can find that QESGD gets almost the same performance on both training and testing results as that of SGD. Due to the quantization variance, QSGD is weak. The gap between QSGD and SGD is pronounced. We also train ResNet-18 on imagenet, the result is in Figure \ref{exp:imagenet}.

Many evidences have show that weight decay would affect the distribution of model parameters and gradients. Since we take uniform quantization,  weight decay would affect the quantization variance. The number of bits can also affect the quantization variance. Then we evaluate these methods on the large model ResNet-56 under different weight decays and quantization bits. The performance is in Table \ref{tab:cnn}. We can find that: (a) under the same settings, QESGD is always better than QSGD; (b) when we do not use weight decay, quantization method would be a little weak than SGD; (c) when we use small bits, the quantization methods have obviously deteriorated, especially that of QSGD.
\begin{figure}[!htb]
  \centering
  \subfigure[Train on ResNet-20]{\includegraphics[width=2.6in]{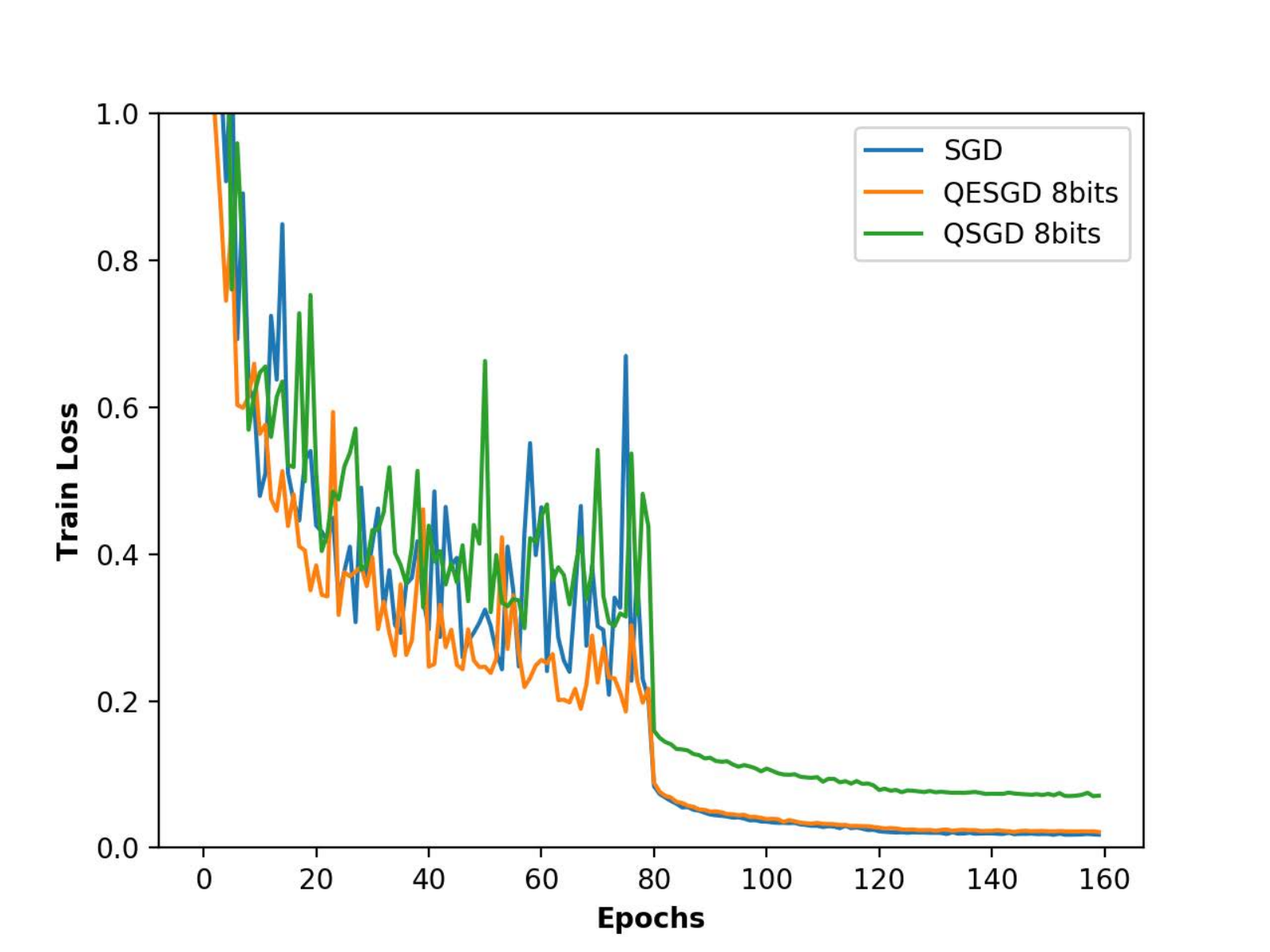}}
  \subfigure[Test on ResNet-20]{\includegraphics[width=2.6in]{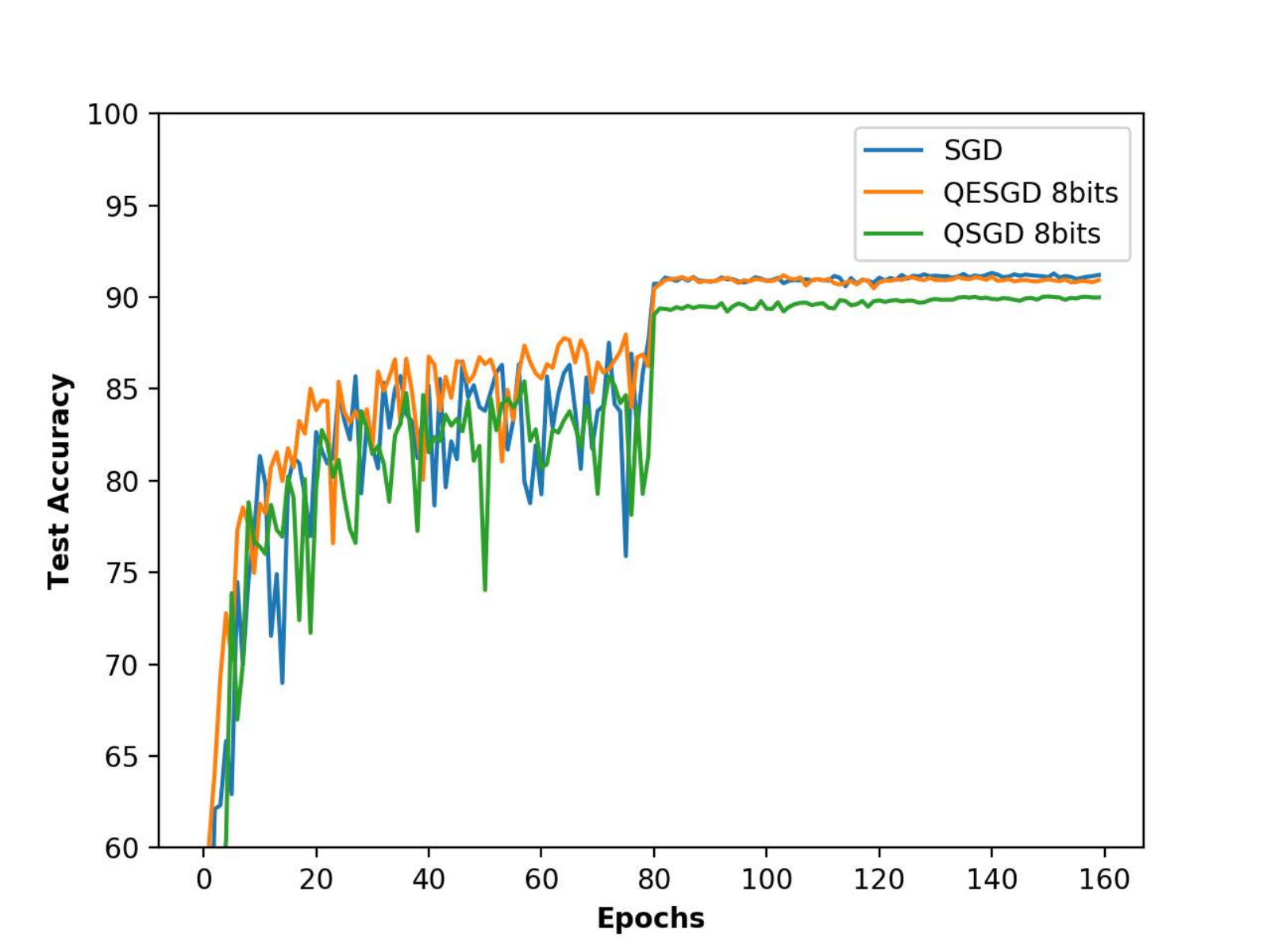}}
  \subfigure[Train on ResNet-56]{\includegraphics[width=2.6in]{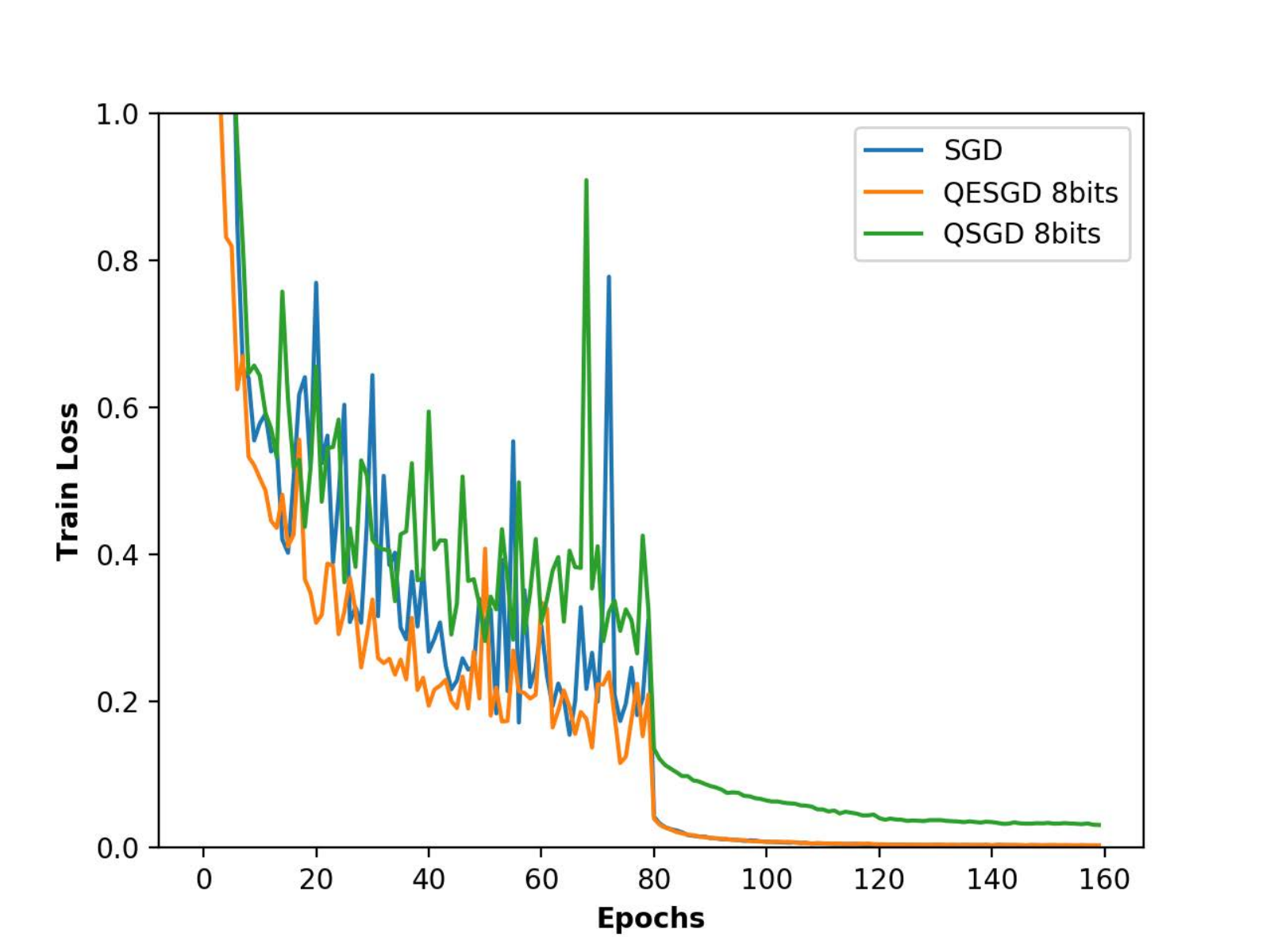}}
  \subfigure[Test on ResNet-56]{\includegraphics[width=2.6in]{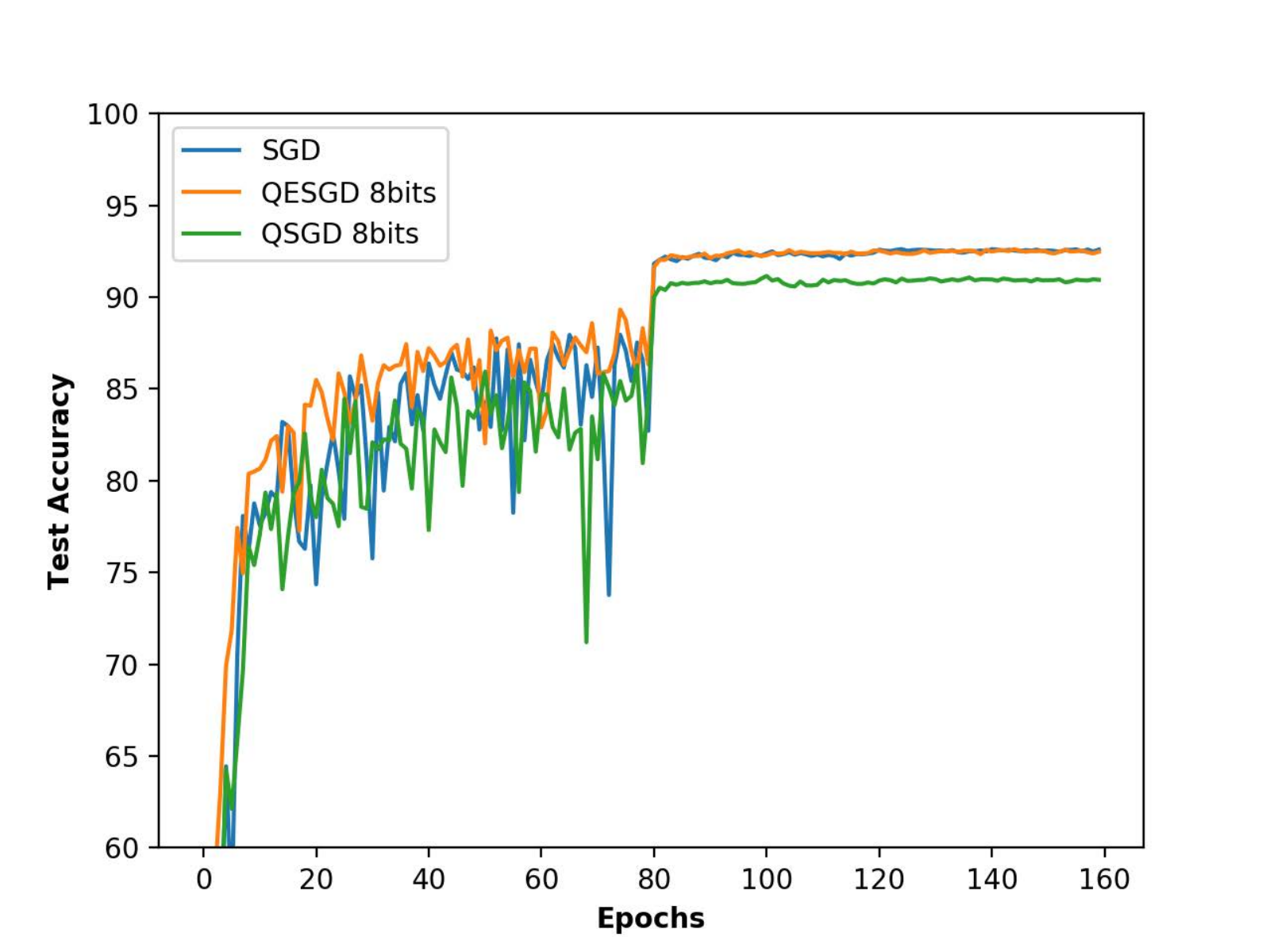}}
  \caption{Efficiency comparison with baselines on CNN models. For QSGD and QESGD, we use 8 bits quantization. The initialization of learning rate is 0.5, the batch size is 128, weight decay is 0.0001.}\label{exp:cnn}
\end{figure}

\begin{table}[!thb]
	\centering
	\begin{tabular}{|c|c|c|c|}
		\hline
		~                      & weight decay & bits    & test accuracy  \\ \hline
		\multirow{2}{*}{SGD}   &     0        &   $-$   &    90.87$\%$   \\ \cline{2-4}
		                       &     0.0001   &   $-$   &    92.61$\%$   \\ \hline
		\multirow{3}{*}{QSGD}  &     0        &   8     &    89.91$\%$    \\ \cline{2-4}
		                       &     0.0001   &   8     &    91.15$\%$   \\ \cline{2-4}
		                       &     0.0001   &   4     &    80.29$\%$   \\ \hline
		\multirow{3}{*}{QESGD} &     0        &   8     &    90.64$\%$         \\ \cline{2-4}
		                       &     0.0001   &   8     &    92.61$\%$   \\ \cline{2-4}
		                       &     0.0001   &   4     &    87.72$\%$   \\ \hline
	\end{tabular}
	\caption{Performance under different settings}\label{tab:cnn}
\end{table}

\begin{figure}[!htb]
  \centering
  \includegraphics[width=2.6in]{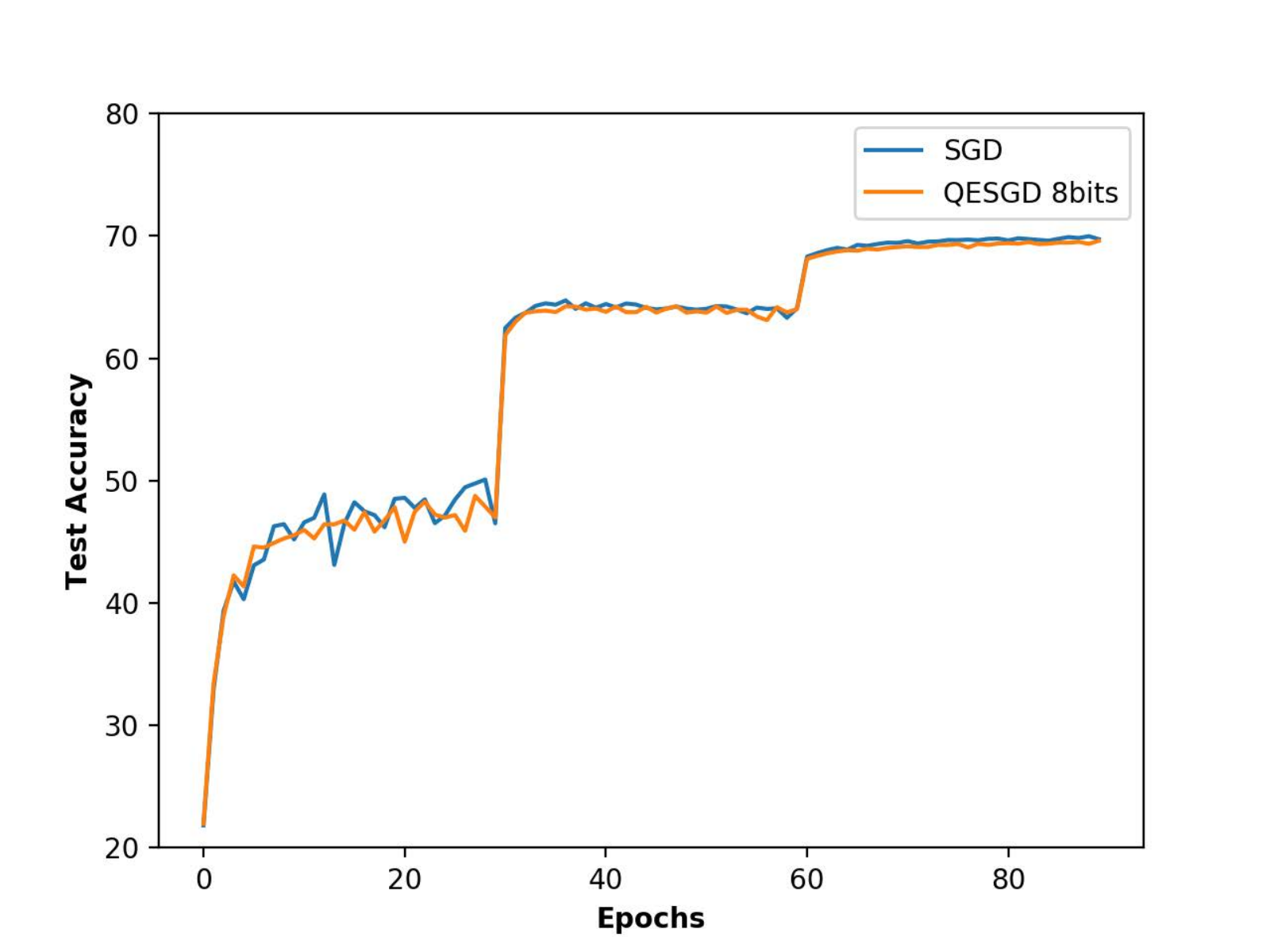}
  \caption{Training ResNet-18 on imagenet. The initialization of learning rate is 0.1, the batch size is 256, weight decay is 0.0001.}\label{exp:imagenet}
\end{figure}

\textbf{RNN.} We also evaluate our method on RNN. We choose the model LSTM that contains two hidden layers, each layer contains with 128 units and the data set TinyShakespeare~\footnote{\url{https://github.com/karpathy/char-rnn}}. The choice of $\delta$ of QESGD and QSGD is the same as that in CNN experiments. The result is in Figure \ref{exp:rnn}. QESGD still gets almost the same performance as that of SGD. Sometimes it is even better than SGD. In this experiment, we can find the gap between QSGD and SGD is smaller than that in CNN experiments. This is due to the gradient clipping technique which is common in the training of RNN. It can reduce the quantization variance of gradients so that QSGD performs well.
\begin{figure}\label{exp:rnn}
\centering
	\includegraphics[width = 6.0cm]{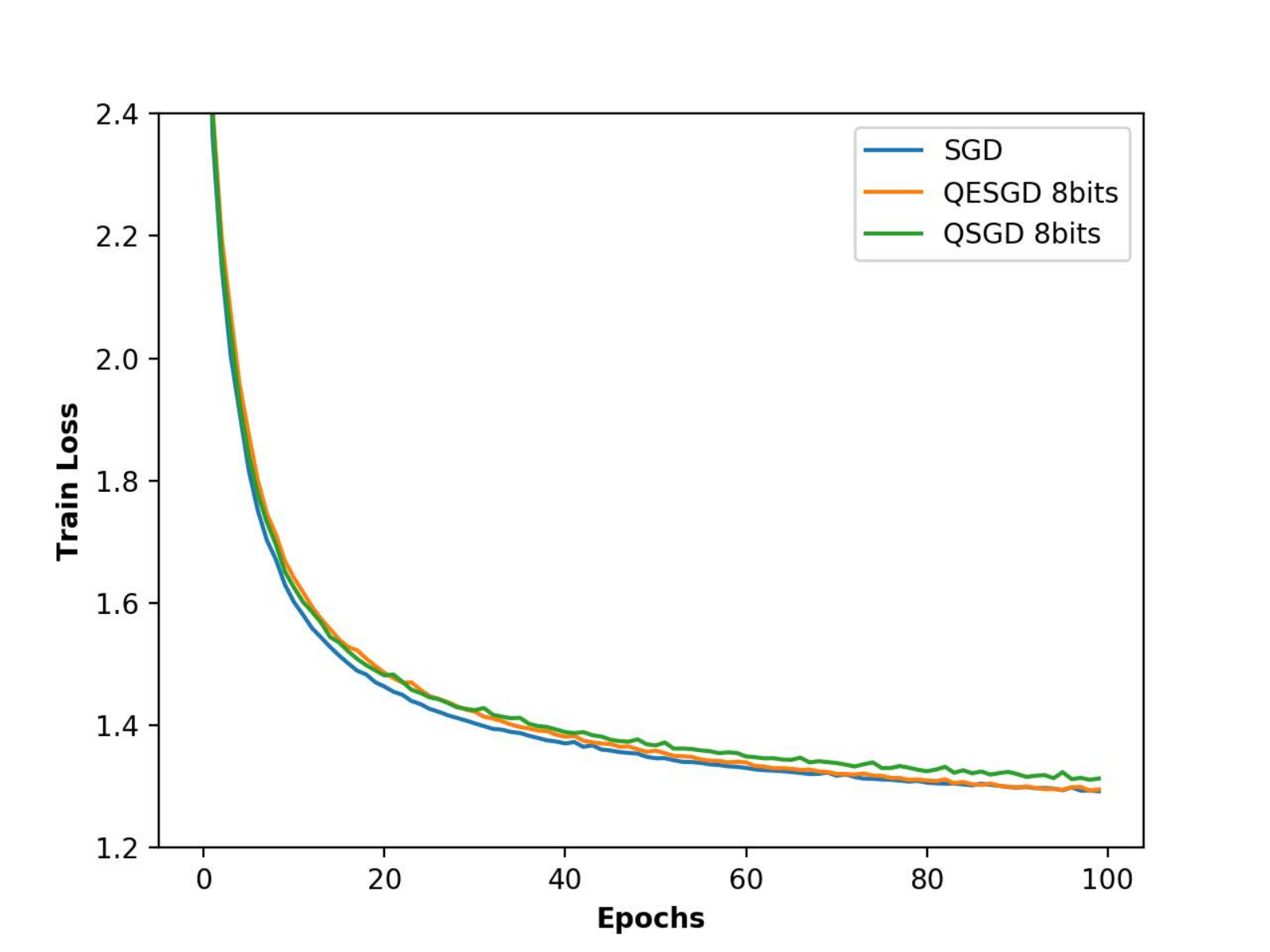}
	\includegraphics[width = 6.0cm]{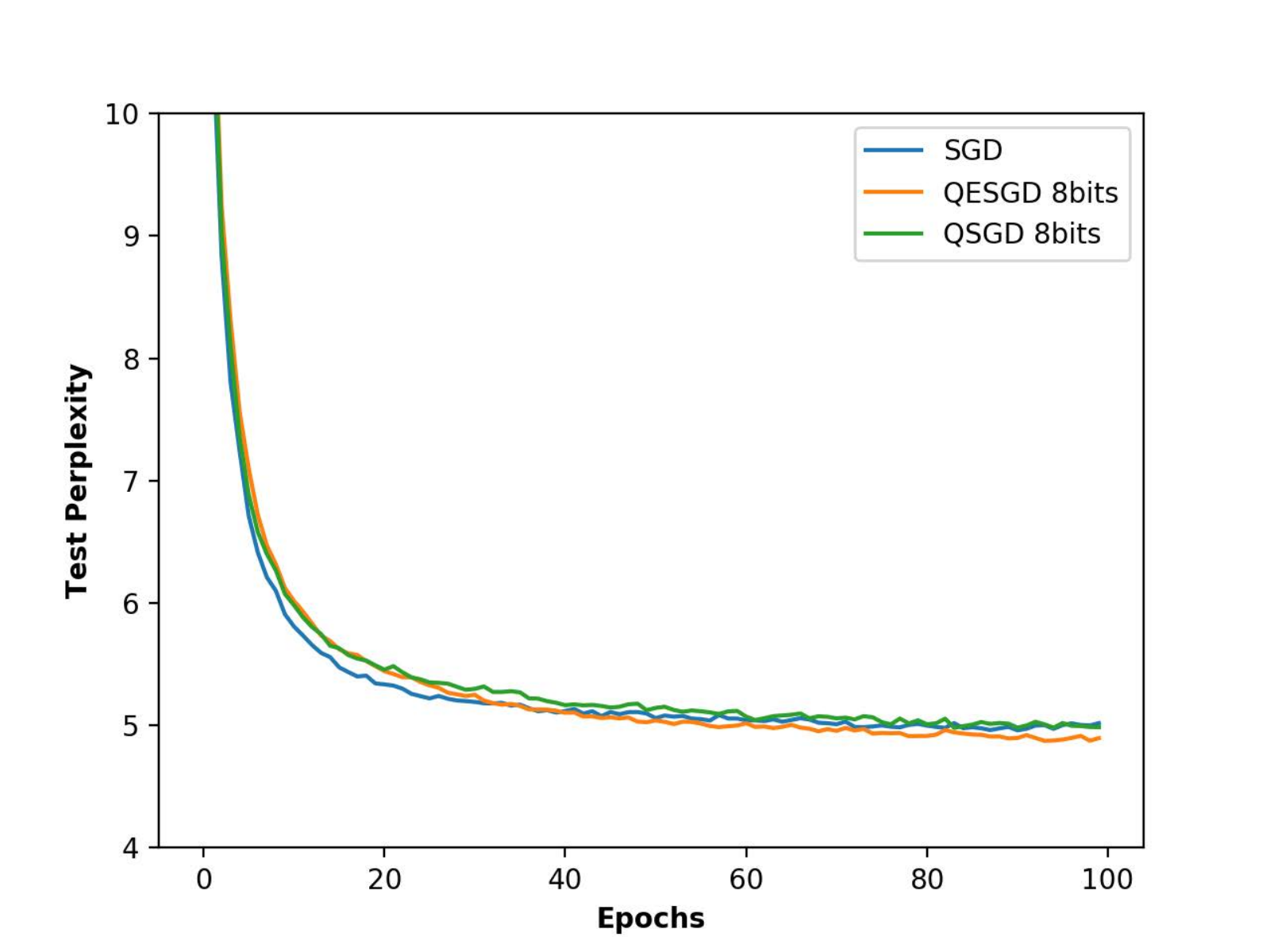}
\caption{Evaluation on RNN model. The initial learning rate is 2, batch size is 50, drop-out ratio is 0.25.}
\end{figure}

\textbf{Distrbuted training}. We also evaluate the communication efficiency of distributed QESGD on
Parameter Server. We conduct experiments on docker with 8 k40 GPUs and 1 server. We use three
models: ResNet-56, AlexNet and VGG-19. The result is in Table 2. The Speedup is defined as (Time
per epoch of SGD)/(Time per epoch of QESGD) under the same number of GPUs. Since QESGD
uses 8bits and SGD uses 32 bits during communication, the ideal speedup is 2/(1+8/32) = 1.6.
Due to the computation cost, the results in Table 2 are smaller than 1.6. On this hand, our method can
reduce communication efficiently.

\begin{table}[!thb]
	\centering
	\begin{tabular}{|c|c|c|c|}
		\hline
		Model                      & Parameters             & GPUs  & Speedup(Ideal 1.6)  \\ \hline
		\multirow{2}{*}{ResNet-56} & \multirow{2}{*}{0.85M} &   4   &    1.12$\times$       \\ \cline{3-4}
		                           &                        &   8   &    1.31$\times$       \\ \hline
		\multirow{2}{*}{AlexNet}   & \multirow{2}{*}{57M}   &   4   &    1.34$\times$       \\ \cline{3-4}
		                           &                        &   8   &    1.49$\times$       \\ \hline
		\multirow{2}{*}{VGG-19}    & \multirow{2}{*}{140M}  &   4   &    1.39$\times$       \\ \cline{3-4}
		                           &                        &   8   &    1.38$\times$       \\ \hline
	\end{tabular}
	\caption{Speedup on different models}\label{tab:speedup}
\end{table}

\section{Conclusion}
In this paper, we propose a new quantization SGD called QESGD. It can reduce the quantization variance by compressing parameters instead of gradients. It is also easy to implemented on distributed platform so that it can reduce the communication by quantization.



\bibliography{ref}
\bibliographystyle{plain}

\end{document}